\def\@doi#1{\href{https://doi.org/#1}
		{\ttfamily https://doi.org/#1}\egroup}}
\def\@doi#1{\ttfamily https://doi.org/#1\egroup}}
	\def\doi{\bgroup\catcode`\_=12\relax\@doi}}
\footnotesize\printfield{doi}}
\definecolor{darkblue}{rgb}{0, 0, 0.7}
\crefname{line}{\text{line}}{\text{lines}} 
\tikzset{
>=stealth',
help lines/.style={dashed, thick},
axis/.style={<->},
important line/.style={thick},
connection/.style={thick, dotted},
}
	\newcommand{\todoinline}[1]{\mbox{}{\color{red}{\textbf{TODO}\ifx#1\\\else:\ \fi #1}}} 
	\newcommand{\todoinline}[1]{}
	\newcommand{\reviewDelete}[1]{{\color{red}\st{#1}}}
	\newcommand{\reviewDelete}[1]{}
\theoremstyle{plain}
\newtheorem{lemma}{Lemma}
\newtheorem{proposition}{Proposition}
\newtheorem{theorem}{Theorem}
\theoremstyle{definition}
\newtheorem{definition}{Definition}
\newtheorem{example}{Example}
\theoremstyle{remark}
\newtheorem{remark}{Remark}
 	\definecolor{colorok}{RGB}{80,80,150}
	\definecolor{colorok}{RGB}{0,0,0}
\title{\LARGE \bf 
One-Step Early Stopping Strategy
using Neural Tangent Kernel Theory and Rademacher Complexity
}
\author{%
Daniel Martin Xavier
\thanks{$^{1}$Daniel Martin Xavier is with 
Universit\'e Paris Saclay, CNRS, ENS Paris-Saclay,  CentraleSupelec, LMPS, Gif-sur-Yvette, France
{\tt\small daniel.martin-xavier@ens-paris-saclay.fr}}
\and
Ludovic Chamoin
\thanks{$^{2}$Ludovic Chamoin is with 
Universit\'e Paris Saclay, CNRS, ENS Paris-Saclay,  CentraleSupelec, LMPS, Gif-sur-Yvette, France.
{\tt\small ludovic.chamoin@ens-paris-saclay.fr}}
\and
Jawher Jerray
\thanks{$^{3}$Jawher Jerray is with 
University de Pau et des Pays de l'Adour, LIUPPA, 
Avenue de l'Universit\'e,
F-64012 Pau, France
{\tt\small jawher.jerray@univ-pau.fr}}
\and
Laurent Fribourg
\thanks{$^{3}$Laurent Fribourg is with 
University Paris-Saclay, CNRS, ENS Paris-Saclay, LMF, F-91190 Gif-Sur-Yvette, France
{\tt\small fribourg@lmf.cnrs.fr}}
}
\begin{document}

\maketitle              
\begin{abstract}
The early stopping strategy consists in stopping the training process of a 
neural network (NN) on a set $S$ of input data before training error is 
minimal. The advantage is that the NN then retains good generalization 
properties, i.e. it gives good predictions on data outside $S$, and a good 
estimate of the statistical error (``population loss'') is obtained. 
We give here an analytical estimation of the optimal stopping time
involving basically the initial training error vector and 
the eigenvalues of the ``neural tangent kernel''.
This yields an 
upper bound on the population loss which is well-suited to the 
underparameterized context (where the number of parameters is moderate
compared with the number of data).
Our method is
illustrated on the example of an 
NN simulating the MPC control of a Van der Pol oscillator. 
\end{abstract}
\section{Introduction}
Recently, a lot of work has been devoted to the field of 
``imitation'' of model predictive control (MPC) via
a neural network (NN) (see~\cite{Arango23}).
The idea is to train an NN on a set $S$ of samples randomly
selected from the MPC data to 
 enable the NN to simulate the MPC.
The advantage is to avoid the need to solve
large optimisation problems in real time, as required
by MPC methods.
However, the replacement of the MPC by the NN induces  an approximation error that we want to evaluate here.

More formally, given a set $S$ of $n$ samples
(randomly selected from a distribution ${\cal D}$ of  MPC data) and 
a gradient descent (GD) used for training the NN on $S$, 
we would like to minimize the {\em difference}
between the outputs given by NN and those 
given by MPC for inputs selected according to~${\cal D}$. 
%
This difference is called ``population loss'' and denoted $L_{\cal D}$.
It is 
(with high probability) the sum of an ``empirical'' loss $L_S$ and a
``generalization'' loss. 
At each step of GD, 
$L_S$ tends to decrease while the generalization loss
tends to increase, so $L_{{\cal D}}$
is ``U-shaped''. This suggests to stop the GD process at the time
$L_{{\cal D}}$ reaches its minimum.
This is a difficult problem because the distribution ${\cal D}$ is unknown.
We attack the problem as follows: At the first step of GD,
we compute a quantity $\gamma$ which, under certain condition,
guarantees that, not $L_{{\cal D}}$ itself,
but an upper bound  $\Omega$ on $L_{{\cal D}}$ decreases
by a value $|\Delta|$.
We then stop the GD at time
$t=t_1=t_0+\eta$. 
Explicit formulas for
$t_1$ 
and $\Omega(t_1)$ are obtained using 
Rademacher complexity and the Neural Tangent Kernel (NTK) theory (see~\cite{JacotHG18}).
%
We check these theoretical results
on the example of an MPC controller for the Van der Pol
oscillator
(see Example~\ref{ex:1}).
\subsection*{Comparison with related work}
%
The NTK theory has mainly been used in the overparameterized framework 
(where the number
 $m$ of neurons is such that $m \gg n$) to explain phenomena of convergence of training and generalization errors \cite{Du,AroraDHLW19,OymakF2019}.
Even in the case of ``moderate'' overparameterization (see \cite{OymakF2019}),
the results impose~$m$ to depend polynomially 
on~$n$. 
In contrast here, we do not make any assumption on the size of~$m$
(at least in the case of {\em normalized} output weight vector).

The NTK theory has recently also been used in the underparameterized framework \cite{BowmanM22,XavierCF23}, but without, as here, dealing with early stopping strategies. An original feature of our work is thus to provide an explanation of the (one-step) early stopping strategy using the NTK theory, without 
assumption about the number of neurons.

Note also that, in the context of overparameterized NNs,
the {\em least} eigenvalue of the NTK matrix is positive,
many theoretical results relying on this positivity
(see, e.g., \cite{Du,AroraDHLW19}).
In contrast, in the context of underparameterized NNs as here, 
most of the eigenvalues of the NTK are null. Our method relies on
the value of a particular positive eigenvalue, usually the {\em highest} one.
%
%
\section{Preliminaries}\label{sec:prelim}
\subsection*{Notation}
In this paper, we denote by $\mathbb{R}$ and $\mathbb{N}$ the sets of real and natural numbers, respectively. These symbols are annotated with subscripts to restrict them in the usual way, e.g., $\mathbb{R}_{>0}$ denotes the positive real numbers. We also denote by $\mathbb{R}^p$ a $p$-dimensional Euclidean space, and by $\mathbb{R}^{p\times q}$ a space of real matrices with $p$ rows and $q$ columns. 
We use bold letters for vectors and bold capital letters for matrices. Given a matrix $\boldsymbol{M} \in \mathbb{R}^{p \times q}$, let $\boldsymbol{M}_{ij}$ be its $(i, j)$-th entry and $\boldsymbol{M}^\top$ its transpose. The Euclidean norm is denoted by $\|\cdot\|$. 
The $1$-norm of a vector $\boldsymbol{v}=(v_1,\dots,v_n)\in\mathbb{R}^n$
(i.e., $\sum_{i=1}^n|v_i|$) is denoted by $\|\boldsymbol{v}\|_1$.
The $n \times n$ identity matrix is represented by $\boldsymbol{I}_n$.
We also use the abbreviation {\em i.i.d.} to indicate that a collection of random variables is independent and identically distributed. Finally, let 
$[n]$ be the set $\{1,\dots,n\}$, and $\mathbb{I}\{E\}$ the indicator function for an event $E$. We use
${\cal N}(\boldsymbol{0}, \boldsymbol{I})$ to denote the standard Gaussian distribution.
\subsection{Gradient descent and training error}\label{ss:2.1}
We  now  recall  from  \cite{Du,AroraDHLW19}  some  definitions  regarding  the
application  of  the  GD  algorithms  to  NNs.  We consider
an NN
with a single hidden layer  a scalar output of the form:
%
\begin{equation}\label{eq:4}
    f(\boldsymbol{W}, \boldsymbol{a}, \boldsymbol{x})=\frac{1}{\mu}\sum_{r=1}^m a_r\zeta(\boldsymbol{w}_r^\top \boldsymbol{x}),
\end{equation}
where $\boldsymbol{x}\in \mathbb{R}^d$ is the input,
$\boldsymbol{w}_r\in\mathbb{R}^d$ is the weight vector of the  first  layer,
$a_r/\mu\in\mathbb{R}$ is  an  output  weight ($\mu\in\mathbb{R}_{>0}$),
$\boldsymbol{W}=(\boldsymbol{w}_1,\dots,\boldsymbol{w}_m)\in \mathbb{R}^{d\times m}$, 
$\boldsymbol{a}/\mu=(a_1,\dots,a_m)/\mu\in \mathbb{R}^{m}$
is the output weight vector, 
$\zeta$~is an 1-Lipschitz activation function 
with $\zeta(0)=0$\footnote{like ReLU, ELU, $\tanh$}.
We fix the second layer
with $a_r$ uniformly distributed 
in $\{-1,1\}$\footnote{In \cite{Du,AroraDHLW19}, 
we have $\mu=\sqrt{m}$, 
so the output weight vector is normalized ($\|\boldsymbol{a}\|/\mu=1$).}.
%
Since $\boldsymbol{a}$ is fixed, we will abbreviate
$f(\boldsymbol{W},\boldsymbol{a},\boldsymbol{x})$
as $f(\boldsymbol{W},\boldsymbol{x})$ (or more simply sometimes
just as $f(\boldsymbol{x})$).
We denote by ${\cal X}\subset\mathbb{R}^d$ the input space, i.e., the set of all possible
instances of $\boldsymbol{x}$.
We denote by ${\cal Y}\subset \mathbb{R}$ the set of all possible 
``target values''.
We are given $n$ input-target samples 
$S=\{(\boldsymbol{x}_i,y_i)\}_{i=1}^n$ with $(\boldsymbol{x}_i,y_i)\in{\cal X}\times {\cal Y}$ drawn i.i.d. from an underlying distribution~${\cal D}$.
We assume  for simplicity that 
for $(\boldsymbol{x},y)$ sampled from ${\cal D}$, we have
$\|\boldsymbol{x}\|=1$ and $|y|\leq m/\mu$.
We train the NN by GD
over~$S$. We assume that, initially, 
the weights of the first layer are {\em almost 0}, in the following sense:
\begin{equation}\label{eq:almost0}
\frac{m}{\mu\sqrt{n}}\max_{r\in[m]}\|\boldsymbol{w}(t_0)\| \rightarrow 0\ \mbox{ as }\ 
n\rightarrow \infty.
\end{equation}
This is the case for example when each $\boldsymbol{w}_r$ is initialized to
a value generated 
from ${\cal N}(\boldsymbol{0}, 2\boldsymbol{I}/m )$.
The objective of GD is to minimize the {\em quadratic loss} function
${\cal L}(\boldsymbol{W})=\sum_{i=1}^n\frac{1}{2}|f(\boldsymbol{W},\boldsymbol{x}_i)-y_i|^2$.
Let us define the error $v_i(t)\in\mathbb{R}$
for $t\in\mathbb{R}_{\geq 0}$ and $i\in[n]$
by 
\begin{equation}\label{eq:v}
    v_i(t)=f(\boldsymbol{W}(t),\boldsymbol{a},\boldsymbol{x}_i)-y_i.
\end{equation}
Let $\boldsymbol{v}(t)=(v_1(t),\dots,v_n(t))^\top\in\mathbb{R}^{n}$
be the {\em training error vector}.
Given an initial time $t_0$, let
$t_k=t_{k-1}+\eta_k$ where $\eta_k$ 
is the learning rate  used at step~$k\in\mathbb{N}_{>0}$.
The $k$-th step of GD 
is defined for
$r\in[m]$ by
the difference equation:
\begin{align}
\boldsymbol{w}_r(t_{k})-\boldsymbol{w}_r(t_{k-1}) & \qquad\nonumber\\
=-\eta_k & \sum_{i=1}^n|v_i(t_{k-1})|
\frac{\partial f(\boldsymbol{W},\boldsymbol{a},\boldsymbol{x}_i)-y_i}{\partial \boldsymbol{w}_r}\qquad\nonumber\\
= -\frac{\eta_k}{\mu} & \sum_{i=1}^n|v_i(t_{k-1})|a_r \zeta'(z_i)
\frac{\partial z_i}{\partial \boldsymbol{w}_r}\qquad\label{eq:7}
\end{align}
where $z_i$ denotes $\boldsymbol{x}_i^\top \boldsymbol{w}_r(t_{k-1})$,
and $\zeta'$ the derivative of $\zeta$.
Using the 1-Lipschitzness of $\zeta$ and the fact that $|a_r|=\|\boldsymbol{x}_i\|=1$,
it follows from
(\ref{eq:7}):
\begin{equation}\label{eq:w}
\|\boldsymbol{w}_r(t_{k})-\boldsymbol{w}_r(t_{k-1})\|\leq \frac{\eta_k}{\mu}\|\boldsymbol{v}(t_{k-1})\|_1,\ \ \forall r\in[m].
\end{equation}
Using Equation~(\ref{eq:4}), it follows from (\ref{eq:w}) for $k=1$:
\begin{equation}\label{eq:fW}
|f(\boldsymbol{W}(t_1), \boldsymbol{a}, \boldsymbol{x})|\leq 
\frac{m}{\mu}\max_{r\in[m]} \boldsymbol{w}_r(t_{0})+\frac{m}{\mu^2}\eta_1\|\boldsymbol{v}_0\|_1.
\end{equation}
As shown in \cite{Du} (Section 3), the discrete dynamics of
the error $\boldsymbol{v}(t)$ writes in a compact way
\begin{equation}\label{eq:8}
\boldsymbol{v}(t_{k})-\boldsymbol{v}(t_{k-1})=-\eta \boldsymbol{H}[\boldsymbol{W}(t_{k-1})]\boldsymbol{v}(t_{k-1}),\ \ \ \boldsymbol{v}(t_0)=\boldsymbol{v}_0
\end{equation}
where $\boldsymbol{H}[\boldsymbol{W}]$ 
is the NTK matrix
(see~\cite{JacotHG18})
defined 
as the $n\times n$ matrix with $(i,j)$-th entry
\begin{equation*} 
\boldsymbol{H}_{ij}[\boldsymbol{W}]=
\frac{\partial f(\boldsymbol{W},\boldsymbol{a},\boldsymbol{x}_i)}{\partial \boldsymbol{W}}^\top
\frac{\partial f(\boldsymbol{W},\boldsymbol{a},\boldsymbol{x}_j)}
{\partial \boldsymbol{W}}.
\end{equation*}
\subsection{Rademacher complexity and generalization error}
The {\em population loss} $L_{\cal D}$ over data distribution ${\cal D}$
and the {\em empirical loss} $L_S$ over 
$S=\{(\boldsymbol{x}_i,y_i)\}_{i=1}^n$ 
are defined as follows
(see \cite{AroraDHLW19}):
$$L_{{\cal D}}[f]=\mathbb{E}_{(x,y)\sim {\cal D}}[\ell(f(\boldsymbol{x}),y)],$$
$$L_{S}[f]=\frac{1}{n}\sum_{i=1}^n\ell(f(\boldsymbol{x}_i),y_i)$$
where $\ell(\cdot,\cdot)$ is the elementary quadratic function
defined by $\ell(z,y) := |z-y|^2$.
Note that we have:
\begin{equation}\label{eq:xi}
L_S[f]=\frac{1}{n}\|\boldsymbol{v}\|^2.
\end{equation}
Let $\boldsymbol{\epsilon}=(\epsilon_1,\dots,\epsilon_n)$,
${\cal T}_k=\{t_0,t_1,\dots,t_k\}$, and
$M_k\in\mathbb{R}_{>0}$ such that:
\begin{equation} \label{eq:Mk}
|f(\boldsymbol{W}(t),\boldsymbol{x})-y|\leq M_k,\ \ \ \ 
\forall (\boldsymbol{x},y)\in{\cal X}\times{\cal Y},\ t\in {\cal T}_k.
\end{equation}
The generalization loss refers to $L_{\cal D}[f]-L_S[f]$ for
the learned function $f$ given sample $S$. Given a class ${\cal F}_k$
of functions $f(\boldsymbol{W}(t),\boldsymbol{x}): {\cal T}_k\times {\cal X}\rightarrow {\cal Y}$, the notion of Rademacher complexity (denoted
${\cal R}_S({\cal F}_k)$) is useful to derive an upper bound for
the generalization loss (see \cite{ChenM2019}). 
We have:
\begin{proposition}\label{prop:1} (cf. Theorem 11.3, p. 270 of \cite{ChenM2019})
With probability at least $1-\delta$ over a sample~$S$ of size~$n$:
$$\sup_{f\in{\cal F}_k} \{L_{{\cal D}}[f]-L_S[f]\}\leq 4M_k{\cal R}_{S}({\cal F}_k)
+3M_k^2 \sqrt{\frac{log \frac{2}{\delta}}{2n}}$$
where 
$${\cal R}_S({\cal F}_k):=\frac{1}{n}\mathbb{E}_{\boldsymbol{\epsilon}\in\{\pm 1\}^n, t\in {\cal T}_k}[\sup_{f\in{\cal F}_k}\sum_{i=1}^n\epsilon_if(\boldsymbol{W}(t),\boldsymbol{x}_i)].$$
%
\end{proposition}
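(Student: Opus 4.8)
The plan is to recognize the statement as an instance of the standard two-sided Rademacher generalization bound, specialized to the time-indexed function class ${\cal F}_k$ and to the truncated quadratic loss, so that the work reduces to assembling classical ingredients while tracking the constants $M_k$ and $M_k^2$ correctly. I would introduce the induced loss class ${\cal G}_k=\{(\boldsymbol{x},y)\mapsto \ell(f(\boldsymbol{W}(t),\boldsymbol{x}),y) : f\in{\cal F}_k,\ t\in{\cal T}_k\}$; by the defining property (\ref{eq:Mk}) of $M_k$, together with $\ell(z,y)=|z-y|^2$, every $g\in{\cal G}_k$ takes values in $[0,M_k^2]$.

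First I would apply McDiarmid's bounded-differences inequality to the functional $\Phi(S):=\sup_{g\in{\cal G}_k}\big(\mathbb{E}_{(\boldsymbol{x},y)\sim{\cal D}}[g]-\tfrac1n\sum_{i=1}^n g(\boldsymbol{x}_i,y_i)\big)$: replacing a single sample changes $\Phi$ by at most $M_k^2/n$, so with probability at least $1-\delta/2$ one gets $\Phi(S)\le \mathbb{E}_S[\Phi(S)] + M_k^2\sqrt{\log(2/\delta)/(2n)}$. The expectation $\mathbb{E}_S[\Phi(S)]$ is then controlled by the usual ghost-sample symmetrization, giving $\mathbb{E}_S[\Phi(S)]\le 2\,\mathbb{E}_S[\widehat{\cal R}_S({\cal G}_k)]$, where $\widehat{\cal R}_S$ denotes the empirical Rademacher complexity (expectation over the signs $\boldsymbol{\epsilon}$ and over $t\in{\cal T}_k$ only). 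A second application of McDiarmid, this time to $S\mapsto\widehat{\cal R}_S({\cal G}_k)$, which again has bounded differences $M_k^2/n$, yields with probability at least $1-\delta/2$ that $\mathbb{E}_S[\widehat{\cal R}_S({\cal G}_k)]\le \widehat{\cal R}_S({\cal G}_k)+M_k^2\sqrt{\log(2/\delta)/(2n)}$. A union bound over the two events then produces $\Phi(S)\le 2\widehat{\cal R}_S({\cal G}_k)+3M_k^2\sqrt{\log(2/\delta)/(2n)}$.

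It remains to pass from the loss class ${\cal G}_k$ back to ${\cal F}_k$. On the range permitted by (\ref{eq:Mk}), the map $z\mapsto\ell(z,y)=|z-y|^2$ is $2M_k$-Lipschitz, since its derivative $2(z-y)$ is bounded in absolute value by $2M_k$ there; hence Talagrand's contraction lemma gives $\widehat{\cal R}_S({\cal G}_k)\le 2M_k\,\widehat{\cal R}_S({\cal F}_k)=2M_k\,{\cal R}_S({\cal F}_k)$, the per-coordinate functions $z\mapsto|z-y_i|^2$ being allowed to differ across $i$. Substituting this and observing that $\Phi(S)=\sup_{f\in{\cal F}_k}\{L_{\cal D}[f]-L_S[f]\}$ closes the argument; the claimed bound follows, and this is essentially the textbook derivation of Theorem 11.3 of \cite{ChenM2019} with loss range $M_k^2$ and contraction factor $2M_k$ substituted in.

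I expect the only genuinely delicate point to be the contraction step: one must check that the arguments fed to $\ell(\cdot,y)$ stay inside the interval where the Lipschitz constant $2M_k$ is valid, which is exactly what (\ref{eq:Mk}) guarantees uniformly over $f\in{\cal F}_k$ and $t\in{\cal T}_k$, and that the contraction lemma is applied to the class indexed jointly by the weights and the discrete time $t\in{\cal T}_k$ so that the resulting quantity is precisely the ${\cal R}_S({\cal F}_k)$ of the statement. Everything else is the bookkeeping of the two McDiarmid applications, the symmetrization, and the union bound at level $\delta/2$ each.
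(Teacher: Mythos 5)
Your derivation is correct and is precisely the standard argument behind the result the paper invokes by citation (Theorem 11.3 of the referenced textbook): bound the loss class by $M_k^2$, apply McDiarmid twice with symmetrization to get the $2\widehat{\cal R}_S({\cal G}_k)+3M_k^2\sqrt{\log(2/\delta)/(2n)}$ form, then use Talagrand contraction with Lipschitz constant $2M_k$ to pass to ${\cal F}_k$. The paper gives no independent proof of this proposition, so there is nothing to contrast; your constants and the handling of the time index $t\in{\cal T}_k$ are consistent with the statement as used.
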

%
Besides, we have:
%
\begin{proposition}\label{prop:2} \cite{BartlettM02}\cite{Ma22} (Theorem 5.7). 
For a network 
with one
hidden layer,
output weights $a_r\in\{-1, 1\}$, normalized inputs $\|\boldsymbol{x}\|=1$
and $1$-Lipschitz activation $\zeta$ with $\zeta(0)=0$, a bound on the Rademacher complexity  is
$${\cal R}_S({\cal F}_k)\leq \Phi c_k$$
with
\begin{equation}\label{eq:Phi}
\Phi=\frac{2m}{\mu \sqrt{n}}
\end{equation}
\footnote{If we take $\mu=\sqrt{m}$ as in \cite{Du,AroraDHLW19}, we have $\Phi=2\sqrt{m/n}$.} and $c_k
 =\max_{r\in[m], t\in {\cal T}_k}\|\boldsymbol{w}_r(t)\|$.
\end{proposition}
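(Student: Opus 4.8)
The plan is to bound the empirical Rademacher complexity of the function class $\mathcal{F}_k$ directly from the definition, using the explicit form of the network~\eqref{eq:4}. First I would fix a time $t\in\mathcal{T}_k$ and write, for $f(\boldsymbol{W}(t),\boldsymbol{x}) = \frac{1}{\mu}\sum_{r=1}^m a_r\zeta(\boldsymbol{w}_r(t)^\top\boldsymbol{x})$, the inner sum $\sum_{i=1}^n \epsilon_i f(\boldsymbol{W}(t),\boldsymbol{x}_i) = \frac{1}{\mu}\sum_{r=1}^m a_r \sum_{i=1}^n \epsilon_i \zeta(\boldsymbol{w}_r(t)^\top\boldsymbol{x}_i)$. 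Since $|a_r|=1$, the supremum over $f$ (equivalently over the admissible weight vectors reaching $\mathcal{T}_k$) is controlled by $\frac{1}{\mu}\sum_{r=1}^m \bigl|\sum_{i=1}^n\epsilon_i\zeta(\boldsymbol{w}_r(t)^\top\boldsymbol{x}_i)\bigr|$, with each $\boldsymbol{w}_r(t)$ constrained by $\|\boldsymbol{w}_r(t)\|\le c_k$.

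Next I would invoke the standard contraction (Ledoux--Talagrand) argument: because $\zeta$ is $1$-Lipschitz with $\zeta(0)=0$, the Rademacher complexity of the class $\{\boldsymbol{x}\mapsto\zeta(\boldsymbol{w}^\top\boldsymbol{x}) : \|\boldsymbol{w}\|\le c_k\}$ is bounded by that of the linear class $\{\boldsymbol{x}\mapsto\boldsymbol{w}^\top\boldsymbol{x}:\|\boldsymbol{w}\|\le c_k\}$. For that linear class, $\mathbb{E}_{\boldsymbol{\epsilon}}\sup_{\|\boldsymbol{w}\|\le c_k}\sum_{i=1}^n\epsilon_i\boldsymbol{w}^\top\boldsymbol{x}_i = c_k\,\mathbb{E}_{\boldsymbol{\epsilon}}\|\sum_{i=1}^n\epsilon_i\boldsymbol{x}_i\| \le c_k\sqrt{\mathbb{E}_{\boldsymbol{\epsilon}}\|\sum_i\epsilon_i\boldsymbol{x}_i\|^2} = c_k\sqrt{\sum_{i=1}^n\|\boldsymbol{x}_i\|^2} = c_k\sqrt{n}$, using independence of the $\epsilon_i$ and $\|\boldsymbol{x}_i\|=1$. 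Summing the $m$ neuron contributions and carrying the $\frac{1}{\mu}$ and the $\frac1n$ prefactor from the definition of $\mathcal{R}_S$ gives $\mathcal{R}_S(\mathcal{F}_k)\le \frac{1}{n}\cdot\frac{1}{\mu}\cdot m\cdot c_k\sqrt{n} = \frac{m}{\mu\sqrt{n}}c_k$. To recover the factor $2$ in $\Phi = \frac{2m}{\mu\sqrt n}$ one either keeps the absolute value via a symmetrization that doubles the constant, or appeals directly to the cited Theorem~5.7 of~\cite{Ma22}; I would simply quote the stated constant.

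The main obstacle is handling the role of the time set $\mathcal{T}_k$: the Rademacher complexity here is defined with an expectation over $t\in\mathcal{T}_k$ as well as over $\boldsymbol{\epsilon}$, and the supremum is over functions realized along the GD trajectory. The clean way around this is to observe that the bound $\|\boldsymbol{w}_r(t)\|\le c_k$ holds uniformly for every $t\in\mathcal{T}_k$ by the very definition $c_k=\max_{r\in[m],\,t\in\mathcal{T}_k}\|\boldsymbol{w}_r(t)\|$, so the per-$t$ bound is the same constant and the expectation over $t$ is vacuous. Thus $\mathcal{F}_k$ is contained in the fixed norm-ball class $\{\boldsymbol{x}\mapsto\frac1\mu\sum_r a_r\zeta(\boldsymbol{w}_r^\top\boldsymbol{x}):\|\boldsymbol{w}_r\|\le c_k,\,a_r\in\{\pm1\}\}$, and the argument above applies verbatim. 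The only care needed is that the $a_r$ are fixed rather than optimized, which can only decrease the supremum, so the bound is safe.
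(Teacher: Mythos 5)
Your proposal is correct and follows essentially the same route as the paper's proof in the appendix: bound the supremum using $|a_r|=1$, apply the Ledoux--Talagrand contraction lemma to replace the $1$-Lipschitz activation $\zeta$ (with $\zeta(0)=0$) by the linear class, and then bound the linear class's Rademacher complexity by $c_k/\sqrt{n}$, picking up the factor $m/\mu$ from the $m$ neurons and the output scaling. The only cosmetic difference is that you prove the linear-class bound $\mathbb{E}_{\boldsymbol{\epsilon}}\sup_{\|\boldsymbol{w}\|\le c_k}\sum_i\epsilon_i\boldsymbol{w}^\top\boldsymbol{x}_i\le c_k\sqrt{n}$ inline via Jensen and Cauchy--Schwarz, whereas the paper cites it as a standard result, and your treatment of the expectation over $t\in\mathcal{T}_k$ (vacuous because $c_k$ bounds all times uniformly) is slightly more explicit than the paper's.
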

\begin{proof} 
For the sake of self-containment, a proof is given in Appendix (Section~\ref{app:final}).
\end{proof}
It then follows from Propositions \ref{prop:1} and \ref{prop:2},
and~(\ref{eq:xi}):
\begin{proposition}\label{prop:Rademacher}
We have with probability at least $1-\delta$ over the sample $S$ of size $n$:
\begin{equation}\label{eq:LD0}
L_{{\cal D}}(t_k) \leq  L_{{\cal D}}^{*}(t_k)+3M_k^2\sqrt{\frac{\log \frac{2}{\delta}}{2n}}
\end{equation}
where:
$L_{{\cal D}}^{*}(t_k) =L_G(t_k)+ \frac{1}{n}\|\boldsymbol{v}(t_k)\|^2$
\ \ with
\begin{equation}\label{eq:Lgen}
L_G(t_k)=4M_kc_k\Phi.
\end{equation}
\end{proposition}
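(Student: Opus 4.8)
The plan is to chain Propositions~\ref{prop:1} and~\ref{prop:2} with the identity~(\ref{eq:xi}), specialised to the predictor obtained after $k$ steps of GD. First I would note that, by construction, the trained map $\boldsymbol{x}\mapsto f(\boldsymbol{W}(t_k),\boldsymbol{x})$ belongs to the class ${\cal F}_k$ of functions indexed by ${\cal T}_k=\{t_0,t_1,\dots,t_k\}$, and that the constant $M_k$ defined in~(\ref{eq:Mk}) is precisely the uniform bound on $|f(\boldsymbol{W}(t),\boldsymbol{x})-y|$ over $(\boldsymbol{x},y)\in{\cal X}\times{\cal Y}$ and $t\in{\cal T}_k$ that Proposition~\ref{prop:1} requires. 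Applying that proposition and dominating the learned function by the supremum over ${\cal F}_k$, we obtain, with probability at least $1-\delta$ over the draw of $S$,
\[
L_{{\cal D}}(t_k)-L_S(t_k)\ \le\ \sup_{f\in{\cal F}_k}\bigl\{L_{{\cal D}}[f]-L_S[f]\bigr\}\ \le\ 4M_k\,{\cal R}_S({\cal F}_k)+3M_k^2\sqrt{\frac{\log\frac{2}{\delta}}{2n}}.
\]

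Next I would estimate the Rademacher term. Proposition~\ref{prop:2} is applicable here since the network has a single hidden layer, output weights $a_r\in\{-1,1\}$, normalised inputs $\|\boldsymbol{x}\|=1$, and a $1$-Lipschitz activation $\zeta$ with $\zeta(0)=0$; it yields ${\cal R}_S({\cal F}_k)\le\Phi c_k$ with $\Phi=2m/(\mu\sqrt{n})$ as in~(\ref{eq:Phi}) and $c_k=\max_{r\in[m],\,t\in{\cal T}_k}\|\boldsymbol{w}_r(t)\|$. Substituting gives $4M_k\,{\cal R}_S({\cal F}_k)\le 4M_kc_k\Phi=L_G(t_k)$ by the definition~(\ref{eq:Lgen}).

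Finally, I would rewrite the empirical loss at step $k$ using~(\ref{eq:xi}) as $L_S(t_k)=\frac1n\|\boldsymbol{v}(t_k)\|^2$, move it back to the right-hand side of the displayed inequality, and collect terms into $L_{{\cal D}}^{*}(t_k)=L_G(t_k)+\frac1n\|\boldsymbol{v}(t_k)\|^2$, which yields exactly~(\ref{eq:LD0}). The argument is essentially bookkeeping; the one point deserving care is the membership of the learned predictor in ${\cal F}_k$ together with the matching role of $M_k$ from~(\ref{eq:Mk}), which are what guarantee that the uniform high-probability bound of Proposition~\ref{prop:1} transfers verbatim to the single function of interest. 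Beyond that I do not anticipate any real obstacle.
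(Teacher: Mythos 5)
Your proof is correct and follows exactly the route the paper intends: the paper gives no separate argument for this proposition beyond noting that it ``follows from Propositions~1 and~2 and~(\ref{eq:xi})'', and your chaining of the uniform generalization bound, the Rademacher complexity estimate ${\cal R}_S({\cal F}_k)\leq \Phi c_k$, and the identity $L_S(t_k)=\frac{1}{n}\|\boldsymbol{v}(t_k)\|^2$ is precisely that derivation. Your added remark about the membership of the learned predictor in ${\cal F}_k$ and the role of $M_k$ from~(\ref{eq:Mk}) is the right point to flag, and nothing further is needed.
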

Note that $L_G(t_0)$ is almost 0 (due to Assumption~(\ref{eq:almost0})).
Note also that, in order to satisfy (\ref{eq:Mk}) for $k=1$,
we can take
\begin{equation}\label{eq:M1bis}
M_1= \frac{m}{\mu}
\left(1+\frac{\eta_1\|\boldsymbol{v}_0\|_1}{\mu}\right),
\end{equation}
using (\ref{eq:almost0}), (\ref{eq:fW}) and the fact that $|y|\leq m/\mu$
 for all $y\in{\cal Y}$.
Such an estimate of $M_1$ is very conservative. For a 
more accurate estimate of $M_1$, we can use a Monte Carlo method
(at the price of introducing a new source of probability $\varepsilon$).
\subsection{Sketch of the method}
Equation~(\ref{eq:LD0}) tells us that,
in order to get an upper bound on $L_{{\cal D}}$ (with probability $1-\delta$),
it suffices to obtain an upper bound on $L_{{\cal D}}^{*}=L_G+ L_{S}$.
We first compute an upper bound $\nu$ on $L_G$ (see Theorem~\ref{prop:NEWnu}),
then an upper bound $\omega$ on $\nu+L_{S}$ 
(so $\omega\geq L_{{\cal D}}^{*}$).
We then determine a factor $\gamma$ which, under certain condition
(see Equation~(\ref{def:gamma})), guarantees
that $\omega$ decreases by at least a quantity~$|\Delta|$ after one
GD step.
We thus obtain an upper bound on $L_{{\cal D}}^{*}$ of the form
$\Omega_1=\omega_0-|\Delta|$. 
The GD is stopped after one step
at time $t_{1}=t_0+\eta$.
The formal definitions of 
$\Delta$, $\gamma$, $\eta$, $\dots$
are given in  Section~\ref{sec:results}
(cf. Section~\ref{sec:table}).
\section{Upper Bound on the Population Loss after One  GD Step}\label{sec:results}
We denote by $\boldsymbol{u}_1(t), \dots, \boldsymbol{u}_n(t)$ the 
eigenvectors of the NTK matrix $\boldsymbol{H}[\boldsymbol{W}(t)]$, and by $\lambda_1(t), \dots, \lambda_n(t)$ their associated eigenvalues at instant $t$. For $i\in[n]$, the expression
$\lambda_i^-$ (resp $\lambda_i^+$) denotes a lower bound 
(resp. upper bound) of $\lambda_i(t)$ for $t\in[t_{0},\infty)$ 
where $t_0$ is the initial time\footnote{As the amplitude of the eigenvalues may have large variations during a brief transient time (especially with ReLU activation), it may be useful to take $t_0$ large enough to ensure
their stabilization
(``warming up'').}. 
Let $(\boldsymbol{u}_i(t_{0}))^\bot$ denote
the space orthogonal to $\boldsymbol{u}_i(t_{0})$.
The expression $\pi_i^\|(\boldsymbol{p})$ (resp. $\pi_i^\bot(\boldsymbol{p})$) denotes the projection 
of $\boldsymbol{p}\in\mathbb{R}^n$ on the eigenvector
$\boldsymbol{u}_i(t_{0})$ (resp. 
$(\boldsymbol{u}_i(t_{0}))^\bot$).
For $i\in[n]$, we define
$A_i\in\mathbb{R}_{\geq 0}$ and~$B_i\in\mathbb{R}_{\geq 0}$ as:
\begin{equation*} 
    A_i=\|\pi_{i}^\|(\boldsymbol{v}_0)\|,
\end{equation*}
\begin{equation*} 
    B_i=\|\pi_{i}^\bot(\boldsymbol{v}_0)\|.
\end{equation*}
%
We suppose that the eigenvalues are ordered as
$\lambda_1\geq \lambda_2\geq\cdots\geq \lambda_n$.
We consider the learning rate $\eta_1$
defined by
\begin{equation*} 
\eta_1=\frac{\beta}{\lambda_1^-}
\ \ \mbox{ for some }\ \beta\in(0,1).
\end{equation*}
We have then: $t_1=t_0+\eta_1=t_0+\beta/\lambda_1^-$.

Let $\theta_1$ be the angle between the eigenvector $\boldsymbol{u}_1(t_1)$
and $\boldsymbol{u}_1(t_{0})$. Let
%
%
%
%
$$\rho_1=1-\lambda_1^-\eta_1\in (0,1), \ \ 
\sigma_1=\eta_1\lambda_1^+\theta_1\in\mathbb{R}_{\geq 0},$$
and $\alpha_1\in\mathbb{R}_{\geq 0}$ such that:
\begin{align}
\alpha_{1} & \geq \frac{\sigma_{1}}{\rho_{1}}\left(\theta_{1} +
\frac{B_1}{A_1}\right).
\qquad \label{eq:alphai}
\end{align}
\begin{definition}\label{def:0}
Let:
\begin{align}
A_1' &=\rho_{1}(1+\alpha_{1})A_1,\qquad\label{eq:defmu1}\\
B_1'
& = B_1+\sigma_{1}A_1.\qquad\label{eq:defmu2}
\end{align}
\end{definition}
\begin{remark}\label{rk:small}
The angle $\theta_1$ between the eigenvector $\boldsymbol{u}_1(t_1)$ and $\boldsymbol{u}_1(t_0)$ is in practice
negligible because 
the speed of rotation of vectors $\boldsymbol{u_1},\dots,\boldsymbol{u_n}$
is slow, and $t_1<\frac{1}{\lambda_1^-}$ is small (typically $t_1<10^{-3}$).
It follows that $\sigma_1$ is itself negligible,
and condition (\ref{eq:alphai}) satisfied with $\alpha_1\approx 0$. 
Therefore: $A_1'\approx \rho_{1} A_1$
and $B'_1\approx B_1$. 
We have $A_1'<A_1$, which reflects the fact that
GD {\em contracts} the error vector $\boldsymbol{v}$ along the eigenvector $\boldsymbol{u}_1$.
In the following, symbols $\theta_1,\sigma_1, \alpha_1$ are kept in the formulas 
for the sake
of formal correctness.
\end{remark}
\begin{remark}\label{rk:rho}
Note that we have:
$\rho_{1}=1-\eta_1\lambda_{1}^-=1-\beta.$
The quantity $1-\beta$ thus corresponds to
the {\em rate of contraction} of the error
vector~$\boldsymbol{v}$ along the
eigenvector $\boldsymbol{u}_{1}$ (see Remark~\ref{rk:small}).
\end{remark}
We now define an expression $\gamma_1$ 
which, when less than $1-\beta/2$,
guarantees that 
(an overapproximation of) $L_{{\cal D}}$ decreases 
during the first step of GD
(see Remark~\ref{rk:<}).
\begin{definition}\label{def:4}
Let
\begin{equation} \label{def:gamma00}
\gamma_1=\frac{2M_1\|v_0\|_1}{(1+\alpha_1)^2\lambda_1^- A_1^2}
\frac{\Phi n}{\mu}
=\frac{4\sqrt{n}M_1\|v_0\|_1}{(1+\alpha_1)^2\lambda_1^- A_1^2}\frac{m}{\mu^2}.
\end{equation}
\end{definition}
Note that, if we take $\mu=\sqrt{m}$ as in \cite{Du,AroraDHLW19},
$\gamma_1$ is independent of $m$.
Suppose now:
\begin{equation}\label{def:gamma}
\gamma_1<1-\frac{\beta}{2}.
\end{equation}
%
%
%
Note that, from~(\ref{eq:M1bis}) and $\eta_1=\beta/\lambda_1^-$, we can take:
\begin{equation}\label{eq:fWbis}
M_1= \frac{m}{\mu}
\left(1+\frac{\beta}{\lambda_1^-}\frac{\|\boldsymbol{v}_0\|_1}{\mu}\right).
\end{equation}
\begin{proposition}\label{lemma:1}
We have:
\begin{align*}
A'_{1} &\geq  \|\pi_{1}^\|(\boldsymbol{v}(t_{1}))\|, \qquad\\ 
B'_{1} &\geq  
\|\pi_{1}^\bot(\boldsymbol{v}(t_{1}))\|. \qquad 
\end{align*}
\end{proposition}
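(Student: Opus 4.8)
The plan is to propagate the error through one GD step using the compact recursion~(\ref{eq:8}) and then read off, separately, the component of $\boldsymbol{v}(t_1)$ along $\boldsymbol{u}_1(t_0)$ and the component in $(\boldsymbol{u}_1(t_0))^\bot$. Write $\boldsymbol{H}_0:=\boldsymbol{H}[\boldsymbol{W}(t_0)]$; being a Gram matrix it is symmetric and positive semidefinite, with eigenpairs $(\lambda_i(t_0),\boldsymbol{u}_i(t_0))$ and $\lambda_1(t_0)\geq\cdots\geq\lambda_n(t_0)\geq 0$. Taking $k=1$ in~(\ref{eq:8}) gives $\boldsymbol{v}(t_1)=(\boldsymbol{I}_n-\eta_1\boldsymbol{H}_0)\boldsymbol{v}_0$. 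Split $\boldsymbol{v}_0=\boldsymbol{p}+\boldsymbol{q}$ with $\boldsymbol{p}=\pi_1^\|(\boldsymbol{v}_0)$ (so $\|\boldsymbol{p}\|=A_1$) and $\boldsymbol{q}=\pi_1^\bot(\boldsymbol{v}_0)$ (so $\|\boldsymbol{q}\|=B_1$). Since $\boldsymbol{p}$ is an eigenvector of $\boldsymbol{H}_0$ for $\lambda_1(t_0)$ and $(\boldsymbol{u}_1(t_0))^\bot$ is $\boldsymbol{H}_0$-invariant, one gets the orthogonal decomposition
\[
\boldsymbol{v}(t_1)=\bigl(1-\eta_1\lambda_1(t_0)\bigr)\boldsymbol{p}+(\boldsymbol{I}_n-\eta_1\boldsymbol{H}_0)\boldsymbol{q},
\]
with the first summand along $\boldsymbol{u}_1(t_0)$ and the second in $(\boldsymbol{u}_1(t_0))^\bot$; hence $\pi_1^\|(\boldsymbol{v}(t_1))=(1-\eta_1\lambda_1(t_0))\boldsymbol{p}$ and $\pi_1^\bot(\boldsymbol{v}(t_1))=(\boldsymbol{I}_n-\eta_1\boldsymbol{H}_0)\boldsymbol{q}$.

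For the parallel part, $\|\pi_1^\|(\boldsymbol{v}(t_1))\|=|1-\eta_1\lambda_1(t_0)|A_1$. Using $\eta_1=\beta/\lambda_1^-$ with $\beta\in(0,1)$ and $\lambda_1^-\leq\lambda_1(t_0)\leq\lambda_1^+$, together with the slowly-varying-eigenvalue regime of Remark~\ref{rk:small} (which keeps $\eta_1\lambda_1^+\leq 1$), one has $0\leq 1-\eta_1\lambda_1(t_0)\leq 1-\eta_1\lambda_1^-=\rho_1$, so $\|\pi_1^\|(\boldsymbol{v}(t_1))\|\leq\rho_1 A_1\leq\rho_1(1+\alpha_1)A_1=A_1'$ since $\alpha_1\geq 0$. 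For the orthogonal part, the eigenvalues of $\boldsymbol{H}_0$ restricted to $(\boldsymbol{u}_1(t_0))^\bot$ lie in $[0,\lambda_1^+]\subseteq[0,2/\eta_1]$, so $\boldsymbol{I}_n-\eta_1\boldsymbol{H}_0$ is a contraction there and $\|\pi_1^\bot(\boldsymbol{v}(t_1))\|\leq B_1\leq B_1+\sigma_1 A_1=B_1'$ since $\sigma_1\geq 0$. Under the literal reading of~(\ref{eq:8}) this already establishes both inequalities, the quantities $\theta_1,\sigma_1,\alpha_1$ entering only as harmless slack.

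The \emph{main obstacle} — and the reason the statement carries the extra factor $(1+\alpha_1)$ in $A_1'$ and the extra term $\sigma_1 A_1$ in $B_1'$ — is that~(\ref{eq:8}) is itself an approximation: over the step $[t_0,t_1]$ the map $\boldsymbol{v}_0\mapsto\boldsymbol{v}(t_1)$ is governed not by $\boldsymbol{H}_0$ but by a ``secant'' NTK averaged along the trajectory, whose leading eigendirection deviates from $\boldsymbol{u}_1(t_0)$ by at most the rotation angle $\theta_1$ between $\boldsymbol{u}_1(t_0)$ and $\boldsymbol{u}_1(t_1)$. Repeating the splitting with that perturbed operator, the $\boldsymbol{u}_1(t_0)$-direction is no longer exactly invariant: a piece of size $\leq\eta_1\lambda_1^+\theta_1 A_1=\sigma_1 A_1$ of the (large) parallel component leaks into $(\boldsymbol{u}_1(t_0))^\bot$, which produces the $+\sigma_1 A_1$ in $B_1'$; symmetrically, the parallel component picks up a cross term of size $\leq\sigma_1(\theta_1+B_1/A_1)A_1$, which by the defining inequality~(\ref{eq:alphai}) for $\alpha_1$ is $\leq\rho_1\alpha_1 A_1$, giving the $(1+\alpha_1)$ factor in $A_1'$. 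The technical heart is therefore the perturbation estimate controlling this leakage in terms of $\theta_1$; the spectral bookkeeping of the preceding paragraph is otherwise routine, and by Remark~\ref{rk:small} the $\theta_1$-terms are negligible in practice, so effectively $A_1'\approx\rho_1 A_1$ and $B_1'\approx B_1$.
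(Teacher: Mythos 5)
Your first two paragraphs give a correct and in fact more elementary argument than the paper's, \emph{provided} one reads the statement literally: since $\pi_{1}^{\|}$ and $\pi_{1}^{\bot}$ are defined as projections onto $\boldsymbol{u}_1(t_0)$ and $(\boldsymbol{u}_1(t_0))^{\bot}$, and $\boldsymbol{v}(t_1)=(\boldsymbol{I}_n-\eta_1\boldsymbol{H}[\boldsymbol{W}(t_0)])\boldsymbol{v}_0$, the eigenspaces of $\boldsymbol{H}[\boldsymbol{W}(t_0)]$ are exactly invariant, the cross terms vanish, and $\alpha_1,\sigma_1$ are pure slack. The paper does something different: it replaces $\boldsymbol{H}[\boldsymbol{W}(t_0)]$ by the abstraction $\boldsymbol{H}_{abs}$ (all eigenvalues zeroed except $\lambda_1$, lower-bounded by $\lambda_1^-$) and then conjugates the resulting $2\times 2$ system by the rotation of angle $\theta_1$ between $\boldsymbol{u}_1(t_0)$ and $\boldsymbol{u}_1(t_1)$; the off-diagonal entries $\pm\lambda^-\eta_1\sin\theta\cos\theta$ of that conjugated matrix are precisely what the $\sigma_1 A_1$ term in $B_1'$ and the $(1+\alpha_1)$ factor in $A_1'$ (via condition~(\ref{eq:alphai})) are built to absorb. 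This matters because $A_1'$ and $B_1'$ are meant to remain valid when the decomposition at $t_1$ is read in the rotated frame $\boldsymbol{u}_1(t_1)$ (as in the iteration of Remark~\ref{rk:only}), which is exactly the leakage you correctly diagnose in your third paragraph — but there you only sketch the perturbation estimate, whereas the paper's rotation-matrix computation is the actual proof of those bounds. So: same starting identity, but your route trivializes the $\theta_1$-dependent terms while the paper's route is the one that justifies their precise form.

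Two smaller points. First, your appeal to Remark~\ref{rk:small} for $\eta_1\lambda_1^+\leq 1$ is unsupported: that remark only asserts that $\theta_1$ is small. Nonnegativity of $1-\eta_1\lambda_1(t_0)$ (equivalently $\lambda_1(t_0)\leq\lambda_1^-/\beta$) is an implicit assumption in the paper's proof as well (it needs the entries of $\boldsymbol{I}_n-\eta_1\boldsymbol{\Lambda}$ nonnegative), so you are not worse off, but you should state it as a hypothesis rather than attribute it to the remark; note also that $|1-\eta_1\lambda_1(t_0)|\leq 1-\eta_1\lambda_1^-=\rho_1$ can fail without it. Second, your contraction bound on $(\boldsymbol{u}_1(t_0))^{\bot}$ needs $\eta_1\lambda_2(t_0)\leq 2$ (not $\eta_1\lambda_1^+\le 2$ per se), which again holds under the same implicit step-size condition.
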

\begin{proof} See Appendix (Section~\ref{app:1}).\end{proof}
It follows from Proposition~\ref{lemma:1}
\begin{equation}\label{eq:LS}
\frac{1}{n}\|v(t_1)\|^2\leq \frac{1}{n}\left((A'_{1}(t_{0}))^2+(B'_{1}(t_{0}))^2\right).
\end{equation}
Let us now define an overapproximation $\nu$ of $L_G$.
\begin{definition}\label{def:nu}
Let:
\begin{align}
\nu_0 & =4M_1\Phi\max_{r\in[m]}\|\boldsymbol{w}_r(t_0)\|,\qquad
\nonumber\\
\nu_1 & =\nu_0+4M_1\frac{\Phi}{\mu}\eta_1 \|\boldsymbol{v}_0\|_1.\qquad
\label{eq:nu}
\end{align}
Note that $\nu_0$ is almost 0 (due to Assumption (\ref{eq:almost0})).
\end{definition}
\begin{theorem}\label{prop:NEWnu}
(Upper bound on $L_{G}$).
We have:

$ L_G(t_1)\leq \nu_1$.
\end{theorem}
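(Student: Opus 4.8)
The plan is to unwind the definitions of $L_G$ and $\nu_1$ and reduce the claim to a one-step bound on the first-layer weight norms. By (\ref{eq:Lgen}) we have $L_G(t_1)=4M_1c_1\Phi$ with $c_1=\max_{r\in[m],\,t\in\{t_0,t_1\}}\|\boldsymbol{w}_r(t)\|$, while (\ref{eq:nu}) gives
\[
\nu_1=\nu_0+4M_1\frac{\Phi}{\mu}\eta_1\|\boldsymbol{v}_0\|_1
      =4M_1\Phi\Big(\max_{r\in[m]}\|\boldsymbol{w}_r(t_0)\|+\frac{\eta_1}{\mu}\|\boldsymbol{v}_0\|_1\Big).
\]
Since $4M_1\Phi>0$, the inequality $L_G(t_1)\le\nu_1$ is equivalent to
\[
c_1\le \max_{r\in[m]}\|\boldsymbol{w}_r(t_0)\|+\frac{\eta_1}{\mu}\|\boldsymbol{v}_0\|_1,
\]
so it suffices to establish this last bound.

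To prove it, I would split the maximum defining $c_1$ according to whether $t=t_0$ or $t=t_1$. For $t=t_0$ the bound $\|\boldsymbol{w}_r(t_0)\|\le\max_{r\in[m]}\|\boldsymbol{w}_r(t_0)\|$ is immediate and is dominated by the right-hand side. For $t=t_1$, apply the one-step increment estimate (\ref{eq:w}) with $k=1$, namely $\|\boldsymbol{w}_r(t_1)-\boldsymbol{w}_r(t_0)\|\le\frac{\eta_1}{\mu}\|\boldsymbol{v}(t_0)\|_1=\frac{\eta_1}{\mu}\|\boldsymbol{v}_0\|_1$, together with the triangle inequality:
\[
\|\boldsymbol{w}_r(t_1)\|\le\|\boldsymbol{w}_r(t_0)\|+\|\boldsymbol{w}_r(t_1)-\boldsymbol{w}_r(t_0)\|\le\max_{r\in[m]}\|\boldsymbol{w}_r(t_0)\|+\frac{\eta_1}{\mu}\|\boldsymbol{v}_0\|_1.
\]
Taking the maximum over $r\in[m]$ and over the two time indices $t\in\{t_0,t_1\}$ yields the required bound on $c_1$, and hence $L_G(t_1)=4M_1c_1\Phi\le\nu_1$.

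The only point needing a word of care is that $M_1$, the constant paired with $t\in\{t_0,t_1\}$ in $L_G(t_1)$, must be a legitimate value of the bound in (\ref{eq:Mk}) for $k=1$; this was already verified through (\ref{eq:fW}) and (\ref{eq:M1bis}) (equivalently (\ref{eq:fWbis})), so it may be invoked directly. Beyond that there is no real obstacle: once the definitions are expanded the statement is a two-line triangle-inequality estimate, the only subtlety being that $c_1$ is a maximum over both $t_0$ \emph{and} $t_1$ — but the $t_0$ contribution is trivially dominated by the same right-hand side and costs nothing.
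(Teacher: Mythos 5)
Your proof is correct and follows essentially the same route as the paper's: both reduce the claim to the one-step weight bound (\ref{eq:w}) combined with the triangle inequality, and then multiply through by $4M_1\Phi$. If anything, you are slightly more careful than the paper in explicitly noting that $c_1$ is a maximum over both $t_0$ and $t_1$ and that the $t_0$ contribution is dominated by $\nu_0\le\nu_1$ — a point the paper's proof glosses over.
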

\begin{proof}
From Equation~(\ref{eq:w}), we have:
\begin{equation}\label{eq:rad00}
\max_{r\in[m]}\|\boldsymbol{w}_r(t_1)\|\leq \max_{r\in[m]}\|\boldsymbol{w}_r(t_0)\|+\eta_1\frac{\|\boldsymbol{v}_0\|_1}{\mu},
\end{equation}
hence:
\begin{align*} 
4M_1\Phi\max_{r\in[m]}\|\boldsymbol{w}_r(t_1)\| & \leq 4M_1\Phi\max_{r\in[m]}\|\boldsymbol{w}_r(t_0)\|\qquad\\
& +4M_1\frac{\Phi}{\mu}\eta_1\|\boldsymbol{v}_0\|_1,
\end{align*}
i.e., using Equation~(\ref{eq:Lgen}) and Definition~\ref{def:nu}:

$L_G(t_{1})
\leq \nu_0+\eta_1\frac{4M_1\Phi}{\mu}\|\boldsymbol{v}_0\|_1=\nu_1$.
\end{proof}
%
\begin{definition}\label{def:omega}
Let us define  
\begin{align*}
\omega_0 & =\frac{1}{n}\|\boldsymbol{v}_0\|^2+\nu_0 \qquad\\
\omega_1 &=\frac{1}{n}\left((A'_{1})^2+(B'_{1})^2\right)+\nu_1.\qquad
\end{align*}
\end{definition}
Note that, from Definition~\ref{def:omega} and Equation~(\ref{eq:LS}),
we have:
\begin{equation}\label{eq:omega1}
\nu_1+\frac{\|\boldsymbol{v}(t_1)\|^2}{n}\leq \omega_1.
\end{equation}
Using $L_{{\cal D}}^{*}=L_G+\frac{1}{n}\|\boldsymbol{v}\|^2$,
Theorem~\ref{prop:NEWnu} and (\ref{eq:LS}), 
we have:
\begin{theorem}\label{prop:NEWomega}
(Upper bound on $L_{{\cal D}}^{*}$).
We have:\ \ \ 
$L_{{\cal D}}^{*}(t_1)\leq \omega_1$.
\end{theorem}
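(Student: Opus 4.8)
The plan is to assemble Theorem~\ref{prop:NEWomega} purely from pieces already established, so the proof is essentially a bookkeeping exercise. Recall from Proposition~\ref{prop:Rademacher} the decomposition $L_{{\cal D}}^{*}(t_1)=L_G(t_1)+\frac{1}{n}\|\boldsymbol{v}(t_1)\|^2$. The strategy is to bound each summand separately and then add the bounds.

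First I would invoke Theorem~\ref{prop:NEWnu}, which gives $L_G(t_1)\leq \nu_1$ directly. Next, for the empirical part, I would use Proposition~\ref{lemma:1}, which states $A'_1\geq \|\pi_1^\|(\boldsymbol{v}(t_1))\|$ and $B'_1\geq \|\pi_1^\bot(\boldsymbol{v}(t_1))\|$; since $\boldsymbol{v}(t_1)$ decomposes orthogonally along $\boldsymbol{u}_1(t_0)$ and its orthogonal complement, Pythagoras gives $\|\boldsymbol{v}(t_1)\|^2 = \|\pi_1^\|(\boldsymbol{v}(t_1))\|^2 + \|\pi_1^\bot(\boldsymbol{v}(t_1))\|^2 \leq (A'_1)^2 + (B'_1)^2$, which is exactly inequality~(\ref{eq:LS}). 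Dividing by $n$ and adding the two bounds yields $L_{{\cal D}}^{*}(t_1) = L_G(t_1) + \frac{1}{n}\|\boldsymbol{v}(t_1)\|^2 \leq \nu_1 + \frac{1}{n}\big((A'_1)^2 + (B'_1)^2\big)$, and the right-hand side is precisely $\omega_1$ by Definition~\ref{def:omega}. Equivalently, one can simply cite~(\ref{eq:omega1}), which already packages this last step.

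There is no real obstacle here: every ingredient — the definition of $L_{{\cal D}}^{*}$, the bound $L_G(t_1)\leq\nu_1$, the projection inequalities, and the definition of $\omega_1$ — is stated above, and the only "computation" is one application of the Pythagorean identity plus a sum of two inequalities. The only point requiring a moment's care is making explicit that the parallel/orthogonal decomposition of $\boldsymbol{v}(t_1)$ is with respect to $\boldsymbol{u}_1(t_0)$ (not $\boldsymbol{u}_1(t_1)$), so that Proposition~\ref{lemma:1} applies verbatim and the orthogonality needed for Pythagoras holds. All the genuine work has been pushed into Theorem~\ref{prop:NEWnu} and Proposition~\ref{lemma:1}; this theorem is just their conjunction.

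Concretely, the proof I would write is: ``By Proposition~\ref{prop:Rademacher}, $L_{{\cal D}}^{*}(t_1)=L_G(t_1)+\frac1n\|\boldsymbol{v}(t_1)\|^2$. Theorem~\ref{prop:NEWnu} gives $L_G(t_1)\leq\nu_1$. Since $\pi_1^\|(\boldsymbol{v}(t_1))$ and $\pi_1^\bot(\boldsymbol{v}(t_1))$ are orthogonal, $\|\boldsymbol{v}(t_1)\|^2=\|\pi_1^\|(\boldsymbol{v}(t_1))\|^2+\|\pi_1^\bot(\boldsymbol{v}(t_1))\|^2\leq (A'_1)^2+(B'_1)^2$ by Proposition~\ref{lemma:1}, i.e.~(\ref{eq:LS}) holds. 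Adding, $L_{{\cal D}}^{*}(t_1)\leq\nu_1+\frac1n((A'_1)^2+(B'_1)^2)=\omega_1$ by Definition~\ref{def:omega}.''
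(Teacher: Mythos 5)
Your proof is correct and follows exactly the route the paper takes: the paper derives Theorem~\ref{prop:NEWomega} by combining the decomposition $L_{{\cal D}}^{*}=L_G+\frac{1}{n}\|\boldsymbol{v}\|^2$ with Theorem~\ref{prop:NEWnu} and inequality~(\ref{eq:LS}) (itself obtained from Proposition~\ref{lemma:1} via the same Pythagorean step you spell out), then identifies the sum with $\omega_1$ from Definition~\ref{def:omega}. Your added remark that the parallel/orthogonal split is taken with respect to $\boldsymbol{u}_1(t_0)$ is a correct and worthwhile clarification, but the argument is otherwise identical to the paper's.
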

We can now give our main result.
\begin{theorem}\label{th:gap}
For $\beta\in(0,1)$ such that~(\ref{def:gamma}) holds,
we have:\ \ \ 
$\omega_1-\omega_0\leq \Delta_1$\ \ \
with
\begin{equation}\label{eq:Psi1}
\Delta_1= - \frac{2A_{1}^2}{n}\beta\left(1-\gamma_{1}-\frac{\beta}{2}\right)(1+\alpha_{1})^2+\frac{D_1}{n}
\end{equation}
where
\begin{equation*}
D_1=A_{1}^2\left(2\alpha_{1}+\alpha_{1}^2+\sigma_{1}^2 +2\sigma_{1} \frac{B_{1}}{A_{1}}\right).
\end{equation*}
\end{theorem}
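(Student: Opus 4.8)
The plan is to compute $\omega_1-\omega_0$ directly from the definitions and verify that it coincides with $\Delta_1$; in fact the identity $\omega_1-\omega_0=\Delta_1$ holds, and the stated inequality is the (weaker) form that is used downstream.

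First I would split, using Definition~\ref{def:omega},
\[
\omega_1-\omega_0=\frac{1}{n}\bigl((A_1')^2+(B_1')^2-\|\boldsymbol{v}_0\|^2\bigr)+(\nu_1-\nu_0).
\]
Since $\pi_1^\|(\boldsymbol{v}_0)$ and $\pi_1^\bot(\boldsymbol{v}_0)$ are orthogonal, Pythagoras gives $\|\boldsymbol{v}_0\|^2=A_1^2+B_1^2$. Plugging in $A_1'=\rho_1(1+\alpha_1)A_1$ and $B_1'=B_1+\sigma_1A_1$ from Definition~\ref{def:0} and expanding the squares, the $B_1^2$ terms cancel, leaving
\[
(A_1')^2+(B_1')^2-\|\boldsymbol{v}_0\|^2
=A_1^2\Bigl(\rho_1^2(1+\alpha_1)^2-1+\sigma_1^2+2\sigma_1\tfrac{B_1}{A_1}\Bigr).
\]
The key algebraic move is the regrouping $\rho_1^2(1+\alpha_1)^2-1=(1+\alpha_1)^2(\rho_1^2-1)+2\alpha_1+\alpha_1^2$, which isolates exactly $D_1/A_1^2$ from the remaining $\sigma_1$ terms; combined with $\rho_1=1-\beta$ (Remark~\ref{rk:rho}), hence $\rho_1^2-1=-\beta(2-\beta)=-2\beta(1-\beta/2)$, this turns the first summand of $\omega_1-\omega_0$ into $-\frac{2A_1^2}{n}\beta(1-\beta/2)(1+\alpha_1)^2+\frac{D_1}{n}$.

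It then remains to handle $\nu_1-\nu_0=4M_1\frac{\Phi}{\mu}\eta_1\|\boldsymbol{v}_0\|_1$ (Definition~\ref{def:nu}). Reading Definition~\ref{def:4} as $\frac{2M_1\|\boldsymbol{v}_0\|_1\Phi n}{\mu}=\gamma_1(1+\alpha_1)^2\lambda_1^-A_1^2$ and using $\eta_1\lambda_1^-=\beta$, I would rewrite this as $\nu_1-\nu_0=\frac{2\beta\gamma_1(1+\alpha_1)^2A_1^2}{n}$. Adding the two contributions and factoring out $\frac{2A_1^2}{n}\beta(1+\alpha_1)^2$ produces precisely $\Delta_1=-\frac{2A_1^2}{n}\beta(1-\gamma_1-\beta/2)(1+\alpha_1)^2+\frac{D_1}{n}$.

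Since every step above is an identity, there is no analytic obstacle: the only delicate point is the bookkeeping in the expansion — in particular the regrouping that makes $D_1$ appear and the substitution of the definition of $\gamma_1$ so that the $\nu_1-\nu_0$ term carries the same factor $(1+\alpha_1)^2$ as the rest of the expression. Note that condition~(\ref{def:gamma}) is not used to prove the identity; it serves only to guarantee that the leading term of $\Delta_1$ is negative (up to the correction $D_1$, negligible by Remark~\ref{rk:small}), i.e. that $\omega$ genuinely decreases after one GD step.
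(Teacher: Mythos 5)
Your proof is correct and follows essentially the same route as the paper's: expand $(A_1')^2+(B_1')^2-A_1^2-B_1^2$, isolate $D_1$ via the regrouping $\rho_1^2(1+\alpha_1)^2-1=(1+\alpha_1)^2(\rho_1^2-1)+2\alpha_1+\alpha_1^2$, use $\rho_1=1-\beta$, and substitute the definition of $\gamma_1$ into $\nu_1-\nu_0$. Your observations that the bound is in fact an identity and that condition~(\ref{def:gamma}) plays no role in establishing it (only in making $\Delta_1$ negative) are both accurate.
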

\begin{proof} See Appendix (Section~\ref{app:2}).\end{proof}
\begin{remark}\label{rk:<}
We have noted in Remark~\ref{rk:small} that, in practice,
$\alpha_1$ and $\sigma_1$  are negligible. It follows
that $\frac{D_1}{n}$ is itself negligible.
Hence $\Delta_1$ is negative because its dominant subterm
is $- \frac{A_{1}^2}{n}(1-\gamma_{1})^2(1+\alpha_{1})^2 <0$.
So $\omega_1\leq \omega_0-|\Delta_1|$.
\end{remark}
Let us now define:
\begin{equation}\label{eq:Omega}
\Omega_1:=\omega_0+\Delta_1.
\end{equation}
It follows from Theorems~\ref{prop:NEWomega} and~\ref{th:gap}:
\begin{equation}\label{eq:newfondam}
L_{\cal D}^{*}(t_1)\leq \omega_1\leq \Omega_1.
\end{equation}
Note that we have $L_G(t_1)\leq \nu_1$,
$\nu_1\leq \omega_1$ and $\omega_1\leq \Omega_1$,
according to Theorem~\ref{prop:NEWnu},
Definition~\ref{def:nu} and Equation~(\ref{eq:newfondam}) respectively,
which leads to:
\begin{equation}\label{eq:final}
L_G(t_1)\leq \nu_1\leq\omega_1\leq \Omega_1.
\end{equation}
From Proposition~\ref{prop:Rademacher} and~(\ref{eq:newfondam}), we then have:
\begin{theorem}\label{prop:final}
For $\beta\in(0,1)$ such that~(\ref{def:gamma}) holds,
we have with probability at least $1-\delta$:
\begin{equation}\label{eq:LD}
L_{{\cal D}} \leq  \Omega_1+3M_1^2\sqrt{\frac{\log \frac{2}{\delta}}{2n}}.
\end{equation}
\end{theorem}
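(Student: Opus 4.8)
The plan is to chain together the bounds already established in the paper, so the argument is essentially bookkeeping once Theorems~\ref{prop:NEWnu}, \ref{prop:NEWomega} and~\ref{th:gap} are in hand. First I would invoke Proposition~\ref{prop:Rademacher} with $k=1$: with probability at least $1-\delta$ over the draw of $S$,
$$L_{{\cal D}}(t_1) \leq L_{{\cal D}}^{*}(t_1) + 3M_1^2\sqrt{\frac{\log(2/\delta)}{2n}},$$
where $L_{{\cal D}}^{*}(t_1) = L_G(t_1) + \frac{1}{n}\|\boldsymbol{v}(t_1)\|^2$. This reduces the claim to the deterministic inequality $L_{{\cal D}}^{*}(t_1) \leq \Omega_1$.

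Next I would bound the two summands of $L_{{\cal D}}^{*}(t_1)$ separately. For the generalization part, Theorem~\ref{prop:NEWnu} gives $L_G(t_1) \leq \nu_1$. For the empirical part, Proposition~\ref{lemma:1} controls the parallel and orthogonal components of the error vector, $\|\pi_1^\|(\boldsymbol{v}(t_1))\| \leq A_1'$ and $\|\pi_1^\bot(\boldsymbol{v}(t_1))\| \leq B_1'$, so by orthogonality $\frac{1}{n}\|\boldsymbol{v}(t_1)\|^2 \leq \frac{1}{n}\bigl((A_1')^2 + (B_1')^2\bigr)$, which is Equation~(\ref{eq:LS}). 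Adding these and recalling Definition~\ref{def:omega} yields $L_{{\cal D}}^{*}(t_1) \leq \omega_1$, i.e.\ Theorem~\ref{prop:NEWomega}. Finally, Theorem~\ref{th:gap} states $\omega_1 - \omega_0 \leq \Delta_1$, hence $\omega_1 \leq \omega_0 + \Delta_1 = \Omega_1$; combining everything gives $L_{{\cal D}}(t_1) \leq \Omega_1 + 3M_1^2\sqrt{\log(2/\delta)/(2n)}$ with probability at least $1-\delta$, as desired.

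The only real content beyond this chaining lies inside Theorem~\ref{th:gap}, whose proof is deferred to the appendix. There one must expand $\omega_1 - \omega_0 = \frac{1}{n}\bigl((A_1')^2 + (B_1')^2 - \|\boldsymbol{v}_0\|^2\bigr) + (\nu_1 - \nu_0)$, substitute $A_1' = \rho_1(1+\alpha_1)A_1$ and $B_1' = B_1 + \sigma_1 A_1$, use the Pythagorean identity $A_1^2 + B_1^2 = \|\boldsymbol{v}_0\|^2$ for the projections of $\boldsymbol{v}_0$, and bound $\nu_1 - \nu_0 = 4M_1\frac{\Phi}{\mu}\eta_1\|\boldsymbol{v}_0\|_1$ through the definition of $\gamma_1$ (Definition~\ref{def:4}) so that it becomes a $\gamma_1$-multiple of $\frac{2A_1^2}{n}\beta(1+\alpha_1)^2$. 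The hypothesis $\gamma_1 < 1 - \beta/2$ of~(\ref{def:gamma}) is exactly what makes the resulting dominant term $-\frac{2A_1^2}{n}\beta\bigl(1-\gamma_1-\beta/2\bigr)(1+\alpha_1)^2$ negative, forcing $\Delta_1 < 0$ up to the negligible correction $D_1/n$ (cf.\ Remark~\ref{rk:<}). That sign analysis---isolating which subterms dominate, and checking that the $\alpha_1,\sigma_1$ cross-terms collected in $D_1$ cannot overturn the negative leading term---is where I would expect to spend the most effort; the present theorem itself is then immediate.
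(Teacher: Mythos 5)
Your proof is correct and follows exactly the paper's route: the paper likewise obtains Theorem~\ref{prop:final} by chaining Proposition~\ref{prop:Rademacher} with the inequality $L_{\cal D}^{*}(t_1)\leq \omega_1\leq \Omega_1$ of Equation~(\ref{eq:newfondam}), which in turn rests on Theorems~\ref{prop:NEWnu}, \ref{prop:NEWomega} and~\ref{th:gap} just as you describe. The only cosmetic remark is that the sign analysis of $\Delta_1$ you emphasize at the end is needed for the bound to be \emph{useful} (i.e.\ for $\Omega_1<\omega_0$), not for the validity of the inequality $\omega_1\leq\omega_0+\Delta_1=\Omega_1$ itself.
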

\begin{remark}\label{rk:M1}
Let us recall that $M_1$ is an upper bound on
$sup_{(x,y)\in{\cal X}\times {\cal Y}}\max_{t\in\{t_0,t_1\}}f(W(t),x)-y$
(see (\ref{eq:Mk}).
There are two possibilities for getting a value for $M_1$. 
We can first choose a value for $\beta$, and obtain
an estimate for $M_1$
using~$(\ref{eq:fWbis})$. 
As this estimate is often too much conservative, it
is often advantageous to estimate $M_1$ using a probabilistic Monte Carlo
method. In this case, the evaluation of $M_1$ does not depend on $\beta$, 
and we choose a value $\beta=\beta^*$  
that {\em maximalises} $|\Delta_1|$ 
(i.e., {\em minimizes} $\Omega_1$). It is easy
to see that $\beta^*=1-\gamma_1$ (which satisfies (\ref{def:gamma}):
$\gamma_1<1-\beta^*/2$).
We have then:
$\eta_1= (1-\gamma_1)/\lambda_1^-$
and $|\Delta_1|\approx A^2(1-\gamma_1)^2/n$.
(See Example~\ref{ex:1}.)
\end{remark}
\begin{remark}
Note that, in the case of normalized output weight vector ($\|\boldsymbol{a}\|/\mu=1$),
the computation of 
$\Omega_{1}$ is {\em independent of~$m$}, relying on the knowledge
of the error vector $\boldsymbol{v}_0$ and the 
NTK matrix~$\boldsymbol{H}$ alone. We have:
$\Omega_1\approx \|\boldsymbol{v}_0\|^2/n-|\Delta_1|$,
and the one-step stopping time is $t_1=t_0+\beta/\lambda_{1}^-$.
\end{remark}
\begin{remark}\label{rk:only}
One can think about iterating the method, taking~$t_1$
as a new initialization time, and performing a second GD step. 
The definition of $\gamma_1$ 
(see (\ref{def:gamma00}))
then becomes:
\begin{equation} \label{eq:gamma(2)}
\gamma_2=\frac{4M_2\sqrt{n}\|\boldsymbol{v}(t_1)\|_1}{(1+\alpha_2)^2\lambda_2^- (A_2)^2}\frac{m}{\mu^2}
\end{equation}
where $A_2$ is the norm of the projection of $\boldsymbol{v}(t_1)$
on $\boldsymbol{u}_2(t_1)$.
In practice in the underparameterized context, we observe that 
$\gamma_2$s is larger than~1, which makes condition
(\ref{def:gamma}) impossible to satisfy. 
A possible interpretation is that $1-\gamma$ acts as
an ``indicator of decrease'' which is 
sensitive enough to detect the {\em sharp drop} of $L_{{\cal D}}$ 
at first step,
but not the {\em slow decrease} that follows.
The case of overparameterized networks is discussed hereafter.
\end{remark}
\subsection*{Overparameterized case}
Let us consider the case $\mu=m$.
For $m$ large enough,
all the eigenvalues of the NTK matrix become {\em positive}.
Besides, at the first GD step, 
the factor $\gamma$ tends to $0$ as $m\rightarrow\infty$
(see Equation~(\ref{def:gamma00})). 
This means that the decrease of $\Omega$ is $\Delta_1\approx A^2/n$
where $A=\|\pi_1(\boldsymbol{v}(t_0))\|$.
One then observes, that unlike the situation described in Remark~\ref{rk:only},
the quantity $\gamma_2$ involved in the second step (see Equation~(\ref{eq:gamma(2)}))
is itself close to $0$. A second step is thus possible and leads to a further
decrease of $\Omega$ by $\Delta_2\approx (A_2)^2/n$ 
with $A_2=\|\pi_2(\boldsymbol{v}(t_1))\|\approx\|\pi_2(\boldsymbol{v}(t_0))\|$.
And so on, after $n$ GD steps, $\Omega$ has decreased
of $\sum_{k=1}^{n}\Delta_k\approx \sum_{k=1}^{n} (A_k)^2/n
\approx\sum_{k=1}^n\|\pi_k(\boldsymbol{v}(t_0))\|^2/n=\|\boldsymbol{v}(t_0)\|^2/n$.
This means that, after $n$ GD steps, $\Omega_n\approx 0$,
which implies that the population loss $L_{{\cal D}}$ is itself almost null. 
This shows (unformally) that there is no overfitting
in the case of overparameterized networks:
Training decreases not only the empirical loss $L_S$ 
but also the generalization loss $L_G$ to $0$. This suggests
a new way of explaining the phenomenon of ``benign overfitting'' (see \cite{Bartlett2020}).
\begin{example}\label{ex:1}
In order to illustrate the above theoretical results, we consider 
the Van der Pol oscillator as treated in~\cite{XavierCF23}. 
The system
possesses two states $x_1$ and $x_2$, where $x_1$ is the oscillator
position, $x_2$ its velocity and $u$ a control action. The system is defined by
\begin{equation*}
\begin{cases}
\dot{x}_1=x_2\\
\dot{x}_2=(1-x_1^2)x_2-x_1+u.
\end{cases}
\end{equation*}
The goal is to design an NN that mimics the behavior of an MPC controller
that steers the system to a desired trajectory $x^{ref}$. The NN used to simulate
the MPC has $d=3$ entries $x_1$, $x_2$,  $x^{ref}$ in the input layer , $m=10$ neurons in the
hidden layer, and one output $u$ in the last layer. 
It was implemented
using PyTorch~\cite{PaszkeGMLBCKLGA19}.  
We take $t_0=0$, $\mu=m=10$ and 
each  $\boldsymbol{w}_r(0)$ ($r\in[m]$)
is independently generated
from ${\cal N}(\boldsymbol{0}, 2\boldsymbol{I}/m )$.
The NN is trained repeatedly on a 
set~$S$ of $n=800$ samples randomly selected
from the MPC data, and a $\tanh$ activation function.
We obtain the following (average) values: 
$\|\boldsymbol{v}_0\| = 17.7$, $\|\boldsymbol{v}_0\|_1=429$,
$\Omega_0= 0.392$, $\nu_0=0$ and
$\lambda_1^-=34.6$.
The norms $A_1$ and~$B_1$ of the projections of $\boldsymbol{v}_0$ 
on $\boldsymbol{u}_1(t_0)$ and $(\boldsymbol{u}_1(t_0))^\bot$ are $13.9$
and $0.35$ respectively.
The rotation angle $\theta_1$ is negligible, so are~$\alpha_{1}$ and~$D_1$.
A Monte Carlo method
gives $M_1\leq 0.25$
with probability at least $1-\varepsilon$\footnote{More precisely,
we find that, for $n'=500$ trials and a standard deviation~$s$,
$M_1$ is in the interval $[0.21,0.25]$
with  probability $\varepsilon=s/10$.}.
We then use $\beta=1-\gamma_1$ (see Remark~\ref{rk:M1}).
The values of $\gamma_{1}$, $\beta$, $\|\boldsymbol{v}(t_1)\|^2/n$, $\Omega_1$ and $t_1=\eta_1=(1-\gamma_1)/\lambda_1^-$ are given in Table~I. 
From Theorem~\ref{prop:final}, it follows that,
with probability $1-\delta-\varepsilon$, we have: 
$L_{{\cal D}} \leq  0.31  +0.19\sqrt{\log \frac{2}{\delta}}$.
%
\begin{table}
\begin{center}
\begin{tabular}{|c|c|c|c|c|c|}
\hline  
$M_1$ & $\beta$ & $\gamma_{1}$ & $\|\boldsymbol{v}(t_1)\|^2/n$ & $\Omega_{1}$ & $t_{1}$ \\  
\hline 
$0.25$ & $0.82$ & $0.18$  & $0.19$ & $0.31$ & $0.024$\\  
\hline
\end{tabular}
\end{center}
\label{table:1}
\caption{
Values of $M_1$, $\beta$, 
$\gamma_{1}$, $\|\boldsymbol{v}(t_1)\|^2/n$, $\Omega_{1}$, $t_{1}$}
\end{table}

The curves of $\Omega$ (red) are drawn 
$\omega$ (black),
$\frac{1}{n}\|\boldsymbol{v}\|^2$ (green),
$\nu$ (blue) and $L_G$ (purple) are drawn for $t\leq t_{1}=0.024$
in Figure~\ref{fig:Th5}.
We see $L_G(t_1)\leq \nu_1\leq\omega_1\leq \Omega_1$
in accordance with formula (\ref{eq:final}).
Note that 
$\omega_1\approx \nu_1$ because
$\omega_1-\nu_1\approx\left((\gamma_1 A_{1})^2+B_{1}^2\right)/n=
\left((0.18\times 13.9)^2+0.35^2\right)/800
=0.008\ll \nu_1=0.11$.
%
%

In order to evaluate the tightness of upper bound $\Omega_1$
and the relevance of stopping time $t_1$, we also compute 
an empirical estimate of 
$L_{{\cal D}}$
via the following ``test loss'' 
$L_{test}(t)=\sum_{i=1}^{n_{test}}(f(\boldsymbol{W}(t),a,\boldsymbol{x}_i^{test})-y_i^{test})^2/n_{test}$\\
where
$\{(\boldsymbol{x}_i^{test},y_i^{test})\}_{i=1}^{n_{test}}$ 
is a separate set of $n_{test}=5000$ samples.
The test loss $L_{test}(t)$ reaches a minimum equal to $0.054$
at 
$t=t^*=0.08$. The upper bound
$\Omega_{1}$ is thus around  {\em 6 times larger} than~$L_{test}(t^*)$,
and $t_{1}$ 
{\em 3 times shorter}  than~$t^*$
(see Figure~\ref{fig:Th6}).
Note that, at 
$t=t_1=0.024$, $L_{test}$ has fallen sharply from  $0.39$ to 
$0.10$,
a value relatively close to the minimum $0.054$. This indicates
that 
$L_{{\cal D}}$ mainly decreases
during the {\em first} GD step,  and confirms the relevance
of the ``one-step''  stopping strategy.
\begin{figure}[h!]
\centering
\includegraphics[scale=0.32]{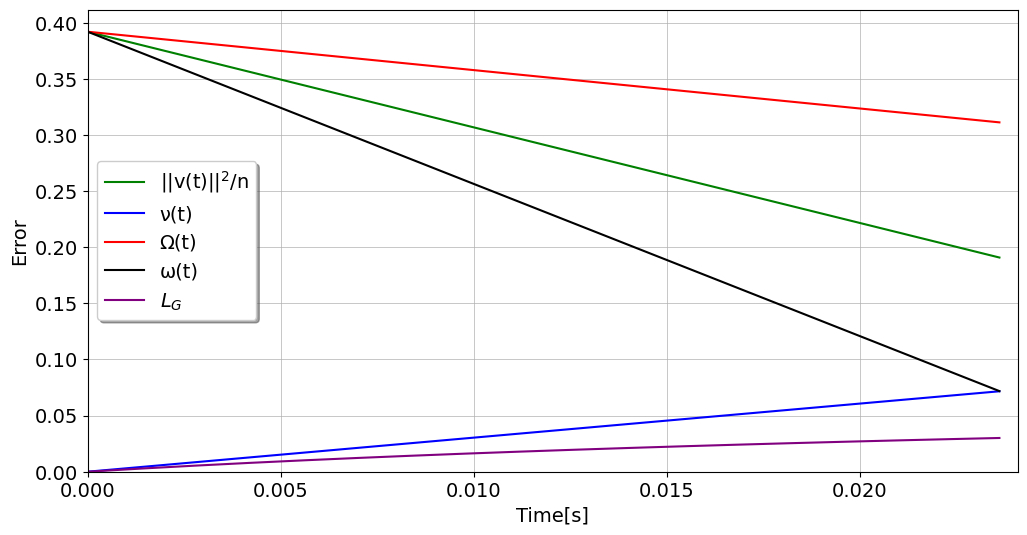}
\caption{Curves $\Omega$, $\frac{\|\boldsymbol{v}\|^2}{n}$,
$\omega$, $\nu$, $L_G$
for $t\leq t_1=0.024$} 
\label{fig:Th5}
\end{figure}
\begin{figure}[h!]
\centering
\includegraphics[scale=0.32]{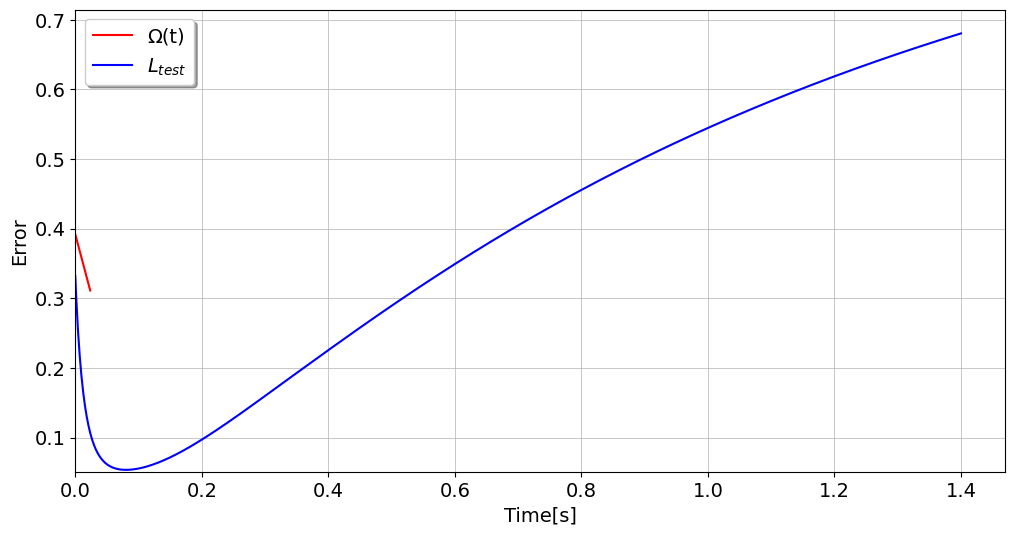}
\caption{Curves $\Omega$ (for $t\leq t_1=0.024$) and $L_{test}$} 
\label{fig:Th6}
\end{figure}

\end{example}
\section{Final Remarks}
%
Using the theories of NTK matrix and Rademacher complexity,
we obtained analytical formulas 
for a {\em one-step} stopping time
and an  upper bound $\Omega_1$ on the population loss. 
The computation of 
$\Omega_{1}$ is {\em independent of $m$}
(at least in the case of normalized output weight vector),
relying on the knowledge
of the {\em initial} error vector $\boldsymbol{v}_0$ and 
NTK matrix~$\boldsymbol{H}$ alone. 
On the example of a Van der Pol oscillator,
the simulations are consistent with the theoretical results.
Our method also suggests a new way of explaining the phenomenon
of ``benign overfitting'' in the overparameterized context.
Our method is however limited to NNs 
with a single hidden layer, a fixed output layer and 
a scalar output. 
In future work, we plan to extend the method to networks with more than one hidden layer
and multi-dimensional outputs.
%
\newcommand{\LNCS}{LNCS}

\ifdefined\VersionAuthor
	\renewcommand*{\bibfont}{\small}
	\printbibliography[title={References}]
\else
	\bibliographystyle{IEEEtran} 
	\bibliography{ecc25}
\fi
\section{Appendix: Summary of Definitions}\label{sec:table}
The fact that $\Omega_{1}$
is an upper bound on $L_{{\cal D}}^*$ relies on the sequence of inequalities:
$$\Omega_1
\geq \omega_{1}\geq L_S(t_{1})+\nu_{1}\geq L_S(t_{1})+L_G(t_{1})=L_{{\cal D}}^*$$
where the
definitions of $\omega_1$, $\nu_1$, $L_s$ and $L_G$
summarized in Table~II. 
Auxiliary definitions
are:
\begin{itemize}
%
%
\item 
$t_1=t_0+\eta_1=t_0+\beta/\lambda_{1}^-$,
\item $\Phi=2m/(\mu \sqrt{n})$,
\item $A'_{1}\approx (1-\beta)\|\pi_1^\|(\boldsymbol{v}_0)\|$,
\ $B'_{1}\approx \|\pi_1^\bot(\boldsymbol{v}_0)\|$
\item $|\Delta_1|\approx 2A_1^2\beta(1-\gamma_1-\beta/2)/n$.
\end{itemize}
where $\beta\in(0,1)$ and $\gamma_1$ (given by Definition~\ref{def:4})
is such that $\gamma_1<1-\beta/2$.
%
\begin{table}
\begin{tabular}{|c|c|c|c|}
\hline  
$L_S(t)$ & $L_G(t)$ & $\omega_1-\nu_1$ & $\nu_1-\nu_{0}$ \\  
\hline 
&\  & \ & \ \\
$\|\boldsymbol{v}(t)\|^2/n$ & $4M_1\Phi$ & $(A'_{1})^2/n$ & $4M_1\eta_1/\sqrt{n}$ \\ 
\ & $\times \max_r\|\boldsymbol{w}_r(t)\|$ & $+(B'_{1})^2/n$ & $\times \|\boldsymbol{v}_{0}\|_1$ \\ 
\hline
\end{tabular}
\label{table:3}
\caption{Definition of $L_S(t)$, $L_G(t)$, $\omega_1-\nu_1$, $\nu_1-\nu_{0}$}
\end{table}
\section{Appendix: Proof of Proposition \ref{lemma:1}}\label{app:1}
For the simplicity of notation, we omit the symbol index~$1$, so $\alpha_{1}, \rho_{1}, A_{1}, B_{1}, \theta_1, \dots$ write $\alpha, \rho, A, B, \theta, \dots$.
For $k=1$, Equation~(\ref{eq:8}) writes:
\begin{equation}\label{eq:H(eta)}
    \boldsymbol{v}(t_1)=(\boldsymbol{I}_n-\eta_1 \boldsymbol{H}[\boldsymbol{W}(t_0)])\boldsymbol{v}_0
\end{equation}
where $\boldsymbol{H}[\boldsymbol{W}(t)]$ is
the NTK matrix at time $t$. 
The matrix $\boldsymbol{H}[\boldsymbol{W}(t_0)]$ can be written:
$\boldsymbol{H}[\boldsymbol{W}(t_0)]=\boldsymbol{P} \boldsymbol{\Lambda} \boldsymbol{P}^\top$
where $\boldsymbol{\Lambda}$ is the diagonal $n\times n$ matrix having $\lambda_1(t_{0}),
\dots, \lambda_n(t_{0})$ as diagonal elements, and $\boldsymbol{P}$ the transition matrix
expressing $\{\boldsymbol{v}(t_1),(\boldsymbol{v}(t_1))^\bot\}$ 
in the basis $\{\boldsymbol{v}(t_0),(\boldsymbol{v}(t_0))^\bot\}$.

The  abstraction $\boldsymbol{H}_{abs}$ of $\boldsymbol{H}[\boldsymbol{W}(t_{0})]$ is the matrix
defined as 
$\boldsymbol{H}_{abs}=\boldsymbol{P} \boldsymbol{\Lambda}_{abs} \boldsymbol{P}^\top$
where $\boldsymbol{\Lambda}_{abs}$ is the $n\times n$ matrix having all its entries $0$
except the $(1,1)$-entry equal to $\lambda_{1}(t)$.
Since all the entries  of  $\boldsymbol{I}_n-\eta_1\boldsymbol{\Lambda}$
are non-negative and not larger than the corresponding
entries of~$\boldsymbol{I}_n-\eta_1\boldsymbol{\Lambda}_{abs}$,
it follows from
Equation~(\ref{eq:H(eta)}):
\begin{align}
\boldsymbol{v}(t_1) & = \boldsymbol{P}(\boldsymbol{I}_n-\eta_1 \boldsymbol{\Lambda}) \boldsymbol{P}^\top \boldsymbol{v}(t_0)\qquad\nonumber\\
&\boldsymbol{\leq} \boldsymbol{P}(\boldsymbol{I}_n-\eta_1\boldsymbol{\Lambda}_{abs})\boldsymbol{P}^\top \boldsymbol{v}(t_0)\qquad\label{eq:first0}
\end{align}
where $\boldsymbol{p}=(p_1,\dots,p_n) \boldsymbol{\leq} \boldsymbol{q}=(q_1,\dots,q_n)\in\mathbb{R}^n$ 
means $p_i\leq q_i$ for all $i\in[n]$. Equation~(\ref{eq:first0}) can be written in a compact way:
\begin{align}
(v_1(t_1),v_{\neq 1}(t_1)) & 
\boldsymbol{\leq} \boldsymbol{P}_2(\boldsymbol{I}_2-\eta_1\boldsymbol{\Lambda}^*_{abs})\boldsymbol{P}_2^\top (v_1(t_0),v_{\neq 1}(t_0))\qquad\label{eq:first}
\end{align}
where $v_{1}(t)=\|\pi_1^\|(\boldsymbol{v}(t))\|$, $v_{\neq 1}(t)=\|\pi_1^\bot(\boldsymbol{v}(t))\|$, 
$\boldsymbol{\Lambda}_{abs}^*$ is the $2\times 2$-diagonal matrix
having $\lambda_{1}^-$ and $0$ as 
first and second diagonal elements respectively,
and $\boldsymbol{P}_2$ is a rotation $2\times 2$-matrix of 
angle $\theta$ between $\boldsymbol{u}_1(t_0)$
and $\boldsymbol{u}_1(t_1)$ (i.e., the angle
between $\pi_1(\boldsymbol{v}(t_0))$ and $\pi_1(\boldsymbol{v}(t_{1}))$):
\[\boldsymbol{P}^{*}=
\left [{\begin{array}{cc}
\cos\theta & -\sin\theta\\
\sin\theta & \cos\theta
\end{array}}\right].
\]
It follows from~(\ref{eq:first}):
\begin{align}
v_1(t_1) & \leq (1-\lambda^-\eta_1 \cos^2\theta)v_1(t_0) \qquad\nonumber\\
&\ \ \ \ -\lambda^-\eta_1\sin\theta\cos\theta v_{\neq 1}(t_0),\qquad \label{eq:i1bis}
\end{align}
\begin{align}
v_{\neq 1}(t_1) & \leq (1-\lambda^-\eta_1\sin^2\theta) v_{\neq 1}(t_0) \qquad\nonumber\\
&\ \ \ \ -\lambda^-\eta_1\sin\theta\cos\theta v_1(t_0).\qquad\label{eq:i2}
\end{align}
Hence, using the facts

$\cos^2\theta=1-\sin^2 \theta$,
$\rho=1-\lambda^-\eta_1$,

$\lambda^-\leq \lambda(t_1)\leq \lambda^+$, $|\sin \theta|\leq |\theta|$,

$v_1(t_0)=\|\pi_{1}^\|(\boldsymbol{v}(t_0))\|= A(t_0)$,

$v_{\neq 1}(t_0)= \|\pi_{1}^\bot(\boldsymbol{v}(t_0))\|= B(t_0)$,\\ 
we derive from (\ref{eq:i1bis}):
\begin{align*}
\|\pi_{1}^\|(\boldsymbol{v}(t_1))\| & \leq \rho\|\pi_{1}^\|(\boldsymbol{v}(t_0))\| \qquad\\
& \ \ \ +\lambda^+\eta_1\theta^2\|\pi_{1}^\|(\boldsymbol{v}_0)\|+\lambda^+\eta_1\theta\|\pi_{1}^\bot(\boldsymbol{v}(t_0))\|\qquad \\
&\leq  \rho A(t_0)+\sigma\theta A(t_0) 
+\sigma B(t_0)\qquad \\
&\leq  \rho(1+\alpha)A(t_0)=  A'(t_0)\qquad
\end{align*}
where the last inequality comes from condition (\ref{eq:alphai}).\\
Likewise, it follows from (\ref{eq:i2}):
\begin{align*}
\|\pi_{1}^\bot(\boldsymbol{v}(t_1))\| &\leq \|\pi_{1}^\bot(\boldsymbol{v}(t_0))\|+\lambda^+\eta_1\theta \|\pi_{1}^\|(\boldsymbol{v}(t_0))\|\qquad\\
&\leq B(t_0) +\sigma A(t_0) = B'(t_0).\qquad
\end{align*}
\section{Appendix: Proof of Theorem \ref{th:gap}}\label{app:2}
As before, we omit the index~$i$ from the different symbols.
Using Equations~(\ref{eq:defmu1}), (\ref{eq:defmu2}), (\ref{eq:LS}), (\ref{eq:nu}) we have
\begin{align*}
\omega_1-\omega_0 &\leq  \nu_1-\nu_0 +\frac{1}{n}\left((A')^2+(B')^2)\right)& \qquad\\
&\ \ \ \ \ -\frac{1}{n}(A^2+B^2)&\qquad\\
&\leq 4M_1\eta_1\frac{\Phi}{\mu}\|\boldsymbol{v}_0\|_1 
-\frac{A^2}{n}(1-\rho^2(1+\alpha)^2) &\qquad\\
&\ \ \ \ +\frac{A^2}{n}\left(\sigma^2 +2\sigma \frac{B}{A}\right). &\qquad
\end{align*}
Hence, using $\rho=1-\eta_1\lambda^-$:
\begin{align}
\omega_1-\omega_0& \leq 4M_1\frac{\Phi}{\mu}\eta_1\|\boldsymbol{v}_0\|_1
-\frac{A^2}{n}2\eta_1\lambda^-(1+\alpha)^2 \qquad \nonumber\\
&\ \ \ \ \ +\frac{A^2}{n}\eta_1^2(\lambda^-)^2(1+\alpha)^2
+\frac{D_1}{n} \label{eq:rhs4bis}
\end{align}
with
$$D_1=A^2\left(\sigma^2 +2\sigma \frac{B}{A}+2\alpha+\alpha^2\right).$$
Equation~(\ref{eq:rhs4bis}) becomes using Equation~(\ref{def:gamma00}):\\
\\
$$\omega_1-\omega_0-\frac{D_1}{n}\leq  
-\frac{2A^2(1+\alpha)^2}{n}\lambda^-\eta_1\left(-\gamma+1-\frac{\lambda^-\eta}{2}\right)$$
with
$\eta_1=\beta/\lambda^-$.
Therefore:
$\omega_1-\omega_0\leq$ $\Delta_1$ with
$$\Delta_1 =- \frac{2A^2(1+\alpha)^2}{n}
\beta\left(1-\gamma-\frac{\beta}{2}\right)+\frac{D_1}{n},$$ 
i.e.~(\ref{eq:Psi1}).
\section{Appendix: Proof of Proposition~\ref{prop:2}}\label{app:final}
$$R_S=E_{\boldsymbol{\epsilon}}\left[\sup_f \frac{1}{n\mu}\sum_{i=1}^n \epsilon_i\sum_{r=1}^m a_r\zeta(\boldsymbol{w}_r^\top \boldsymbol{x}_i)\right],$$
i.e., since all the $a_r$s are equal to $\pm 1$:
\begin{equation*} 
R_S\leq \frac{1}{n\mu}E_{\boldsymbol{\epsilon}}\left[\sup_f \sum_{i=1}^n \epsilon_i\sum_{r=1}^m  |\zeta(\boldsymbol{w}_r^\top \boldsymbol{x}_i)|\right].
\end{equation*}
Thanks to Talagrand's lemma, 1-Lipschitzness of $\zeta$, it follows:
$$R_S\leq\frac{2}{\mu n} E_{\boldsymbol{\epsilon}}\left[\sup_f \sum_{i=1}^n \epsilon_i\sum_{r=1}^m  \boldsymbol{w}_r^\top \boldsymbol{x}_i\right].$$
Then, using 
$\max_{\ell\in\{0,1\}, r\in[m]}\|\boldsymbol{w}_r(t_\ell)\|\leq c_1$:
%
$$R_S\leq \frac{2m}{\mu}\left(\frac{1}{n}E_{\boldsymbol{\epsilon}}\left[\sup_{\|\boldsymbol{w}\|\leq c_1, \|\boldsymbol{x}_i\|=1} \sum_{i=1}^n \epsilon_i \boldsymbol{w}^\top \boldsymbol{x}_i\right]\right).$$
Then, using $R_{{\cal H}}\leq c_1/\sqrt{n}$
for the class 
$${\cal H}=\{\boldsymbol{w}^\top\boldsymbol{x}:
\|\boldsymbol{w}\|\leq c_1, \|\boldsymbol{x}\|=1\}$$
(see \cite{AFM2020} Theorem 11.5),
we have:
$$R_S\leq \frac{2m}{\mu}\left(\frac{c_1}{\sqrt{n}}\right)=c_1\Phi,$$
with 
$$\Phi=\frac{2m}{\mu\sqrt{n}}.$$
\end{document}